\icmltitlerunning{Easy Monotonic Policy Iteration}
\begin{document} 

%
%



\newcommand{\avet}{{\mathbf  a}}
\newcommand{\bvet}{{\mathbf  b}}
\newcommand{\cvet}{{\mathbf  c}}
\newcommand{\dvet}{{\mathbf  d}}
\newcommand{\evet}{{\mathbf  e}}
\newcommand{\fvet}{{\mathbf  f}}
\newcommand{\gvet}{{\mathbf  g}}
\newcommand{\hvet}{{\mathbf  h}}
\newcommand{\ivet}{{\mathbf  i}}
\newcommand{\jvet}{{\mathbf  j}}
\newcommand{\kvet}{{\mathbf  k}}
\newcommand{\lvet}{{\mathbf  l}}
\newcommand{\mvet}{{\mathbf  m}}
\newcommand{\nvet}{{\mathbf  n}}
\newcommand{\ovet}{{\mathbf  o}}
\newcommand{\pvet}{{\mathbf  p}}
\newcommand{\qvet}{{\mathbf  q}}
\newcommand{\rvet}{{\mathbf  r}}
\newcommand{\svet}{{\mathbf  s}}
\newcommand{\tvet}{{\mathbf  t}}
\newcommand{\uvet}{{\mathbf  u}}
\newcommand{\vvet}{{\mathbf  v}}
\newcommand{\xvet}{{\mathbf  x}}
\newcommand{\yvet}{{\mathbf  y}}
\newcommand{\zvet}{{\mathbf  z}}
\newcommand{\wvet}{{\mathbf  w}}

\newcommand{\Avet}{{\mathbf  A}}
\newcommand{\Bvet}{{\mathbf  B}}
\newcommand{\Cvet}{{\mathbf  C}}
\newcommand{\Dvet}{{\mathbf  D}}
\newcommand{\Evet}{{\mathbf  E}}
\newcommand{\Fvet}{{\mathbf  F}}
\newcommand{\Gvet}{{\mathbf  G}}
\newcommand{\Hvet}{{\mathbf  H}}
\newcommand{\Ivet}{{\mathbf  I}}
\newcommand{\Jvet}{{\mathbf  J}}
\newcommand{\Kvet}{{\mathbf  K}}
\newcommand{\Lvet}{{\mathbf  L}}
\newcommand{\Mvet}{{\mathbf  M}}
\newcommand{\Nvet}{{\mathbf  N}}
\newcommand{\Ovet}{{\mathbf  O}}
\newcommand{\Pvet}{{\mathbf  P}}
\newcommand{\Qvet}{{\mathbf  Q}}
\newcommand{\Rvet}{{\mathbf  R}}
\newcommand{\Svet}{{\mathbf  S}}
\newcommand{\Tvet}{{\mathbf  T}}
\newcommand{\Uvet}{{\mathbf  U}}
\newcommand{\Xvet}{{\mathbf  X}}
\newcommand{\Yvet}{{\mathbf  Y}}
\newcommand{\Vvet}{{\mathbf  V}}
\newcommand{\Wvet}{{\mathbf  W}}
\newcommand{\Zvet}{{\mathbf  Z}}

\newcommand{\Deltavet}{\mathbf  \Delta}
\newcommand{\Lambdavet}{{\mathbf  \Lambda}}
\newcommand{\Sigmavet}{\mathbf  \Sigma}
\newcommand{\Thetavet}{{\mathbf  \Theta}}

\newcommand{\s}{ {\sigma} }

\newcommand{\e}{{\mathrm e}}
\newcommand{\jm}{{\mathrm j}}
\newcommand{\E}{{\mathrm E}}
\newcommand{\Ex}{{\mathbb E}}
\renewcommand{\d}{{\mathrm d}}
\newcommand{\dt}{{\mathrm d}t}
\newcommand{\X}{ {\mathcal X} }
\newcommand{\Y}{ {\mathcal Y} }
\newcommand{\Z}{ {\mathcal Z} }

\newcommand{\calA}{{\mathcal A}}
\newcommand{\calB}{{\mathcal B}}
\newcommand{\calC}{{\mathcal C}}
\newcommand{\calD}{{\mathcal D}}
\newcommand{\calE}{{\mathcal E}}
\newcommand{\calF}{{\mathcal F}}
\newcommand{\calG}{{\mathcal G}}
\newcommand{\calH}{{\mathcal H}}
\newcommand{\calI}{{\mathcal I}}
\newcommand{\calJ}{{\mathcal J}}
\newcommand{\calK}{{\mathcal K}}
\newcommand{\calL}{{\mathcal L}}
\newcommand{\calM}{{\mathcal M}}
\newcommand{\calN}{{\mathcal N}}
\newcommand{\calO}{{\mathcal O}}
\newcommand{\calP}{{\mathcal P}}
\newcommand{\calQ}{{\mathcal Q}}
\newcommand{\calR}{{\mathcal R}}
\newcommand{\calS}{{\mathcal S}}
\newcommand{\calT}{{\mathcal T}}
\newcommand{\calU}{{\mathcal U}}
\newcommand{\calV}{{\mathcal V}}
\newcommand{\calX}{{\mathcal X}}
\newcommand{\calY}{{\mathcal Y}}
\newcommand{\calW}{{\mathcal W}}
\newcommand{\calZ}{{\mathcal Z}}
\newcommand{\qtil}{{\tilde{q}}}
\newcommand{\td}{{\tilde{\delta}}}

\newcommand{\vect}[1]{ {\mbox{\rm vec}(#1)} }


\newcommand{\Atil}{\tilde{A}}
\newcommand{\Zhat}{\hat{Z}}
\newcommand{\Hbar}{\bar{H}}
\newcommand{\Dhat}{\hat{D}}
\newcommand{\dhat}{\hat{d}}

\newcommand{\rhat}{\hat{r}}
\newcommand{\xhat}{\hat{x}}
\newcommand{\yhat}{\hat{y}}
\newcommand{\zhat}{\hat{z}}
\newcommand{\xbar}{\bar{x}}
\newcommand{\ubar}{\bar{u}}
\newcommand{\ybar}{\bar{y}}
\newcommand{\zbar}{\bar{z}}
\newcommand{\pdot}{\dot{p}}
\newcommand{\pddot}{\ddot{p}}
\newcommand{\pbar}{\bar{p}}
\newcommand{\qdot}{\dot{q}}
\newcommand{\qddot}{\ddot{q}}
\newcommand{\qbar}{\bar{q}}
\newcommand{\xdot}{\dot{x}}
\newcommand{\ydot}{\dot{y}}
\newcommand{\zdot}{\dot{z}}
\newcommand{\yddot}{\ddot{y}}
\newcommand{\thdot}{\dot{\theta}}
\newcommand{\thddot}{\ddot{\theta}}
\newcommand{\util}{{\tilde{u}}}
\newcommand{\xtil}{{\tilde{x}}}
\newcommand{\ytil}{{\tilde{y}}}
\newcommand{\lam}{\lambda}
\newcommand{\lamax}{\lambda\ped{max}}
\newcommand{\lamin}{\lambda\ped{min}}
\newcommand{\adj}{ {\mbox{\rm adj}\;} }
\newcommand{\sign}{\mbox {\rm sgn}}
\newcommand{\spn}{\mbox {\rm span}}
\newcommand{\barJ}{\bar{J}}
\newcommand{\dom}{\mathop {\mathrm {dom}}}
\newcommand{\card}{\mathop{\mathrm{card}}}
\newcommand{\subt}{\mathop{\mathrm{s.t.}}}

\newcommand{\epi}{\mathop{\mathrm{epi}}}
\newcommand{\env}{\mathop{\mathrm{env}}}
\newcommand{\chull}{\mathop{\mathrm{co}}}
\newcommand{\graph}{\mathop{\mathrm{graph}}}
\newcommand{\prox}[1]{\mathop{\mathrm{prox}_{#1}}}
\newcommand{\sthr}[1]{\mathop{\mathrm{sthr}_{#1}}}

\def\hardsection{$\spadesuit\;$}

\newcommand{\Real}[1]{ { {\mathbb R}^{#1} } }
\newcommand{\Realp}[1]{ { {\mathbb R}_{+}^{#1} } }
\newcommand{\Realpp}[1]{ { {\mathbb R}_{++}^{#1} } }
\newcommand{\Complex}[1]{ { {\mathbb C}^{#1} } }
\newcommand{\Imag}[1]{ { {\mathbb I}^{#1} } }
\newcommand{\Field}[1]{ {\mathbb F}^{#1} }
\newcommand{\F}{ {\mathbb F}}
\newcommand{\Orth}[1]{ { {\calG_{\calO}^{#1}} } }
\newcommand{\Unit}[1]{ { {\calG_{\calU}^{#1}} } }
\newcommand{\Sym}[1]{ { {\mathbb S}^{#1} } }
\newcommand{\Symp}[1]{ { {\mathbb S}_{+}^{#1} } }
\newcommand{\Sympp}[1]{ { {\mathbb S}_{++}^{#1} } }
\newcommand{\Herm}[1]{ { {\mathbb H}^{#1} } }
\newcommand{\Skew}[1]{ { {\mathbb S\mathbb K}^{#1} } }
\newcommand{\Skherm}[1]{ { {\mathbb H\mathbb K}^{#1} } }
\newcommand{\Rman}[1]{ { {\mathcal R}^{#1} } } 
\newcommand{\Cman}[1]{ { {\mathcal C}^{#1} } }
\newcommand{\Hinf}[1]{ {  {\mathcal H}_\infty^{#1} } }
\newcommand{\RHinf}[1]{ { {\mathcal RH}_\infty^{#1} } }
\newcommand{\Htwo}[1]{ {  {\mathcal H}_2^{#1} } }
\newcommand{\RHtwo}[1]{ { {\mathcal RH}_2^{#1} } }

\newcommand{\dist}[1]{{\mathrm{dist}}{\left( #1 \right)}}
\newcommand{\diff}[2]{ \frac{\d {#1}}{\d {#2}}  }
\newcommand{\diffp}[2]{ \frac{\partial {#1}}{\partial {#2}}  }
\newcommand{\diffqd}[2]{ \frac{\d^2 {#1}}{\d {#2}^2}  }
\newcommand{\diffq}[2]{ \frac{\d^2 {#1}}{\d {#2}}  }
\newcommand{\diffqq}[3]{ \frac{\d^2 {#1}}{ \d {#2} \d {#3}  }}
\newcommand{\diffpq}[2]{ \frac{\partial^2 {#1}}{\partial {#2}^2}  }
\newcommand{\difftq}[3]{ \frac{\partial^2 {#1}}{\partial {#2}\partial {#3}}  }
\newcommand{\diffi}[3]{ \frac{\d^{#3} {#1}}{\d {#2}^{#3}}  }
\newcommand{\diffpi}[3]{ \frac{\partial^{#3} {#1}}{\partial {#2}^{#3}}  }
\newcommand{\binomial}[2]{\scriptsize{\left(\!\! \ba{c} #1 \\ #2 \ea \!\! \right)} }
\newcommand{\comb}[2]{{\left(\!\!\! \ba{c} #1 \\ #2 \ea \!\!\! \right)} }

\newcommand{\simax}{{\sigma_{\mathrm{max}}}}
\newcommand{\simin}{{\sigma_{\mathrm{min}}}}
\newcommand{\prob}{{\mbox{\rm Prob}}}
\newcommand{\var}{{\mbox{\rm var}}}
\newcommand{\sint}{{\mbox{\rm int}\,}} 
\newcommand{\relint}{{\mbox{\rm relint}\,}} 
\newcommand{\ns}{{\mbox{\tt ns}}}

\newcommand{\rank}{\mathop{\mathrm{rank}}\nolimits}
\newcommand{\range}{\mathop{\mathcal{R}}\nolimits}
\newcommand{\nulsp}{\mathop{\mathcal{N}}\nolimits}
\newcommand{\diagop}{\mathop{\mathrm{diag}}\nolimits}
\newcommand{\Var}{\mathop{\mathrm{var}}\nolimits}
\newcommand{\tr}{\mathop{\mathrm{trace}}\nolimits}
\newcommand{\sinc}{\mathop{\mathrm{sinc}}\nolimits}

\newcommand{\pre}[1]{ { {\mathop{\mathrm{Re}}}  \left({#1}\right)} }
\newcommand{\pim}[1]{ { {\mathop{\mathrm{Im}}}  ({#1})} }
\newcommand{\rp}{ ^{\Real{}} }
\newcommand{\ip}{ ^{\Imag{}} }

\newcommand{\one}{{\mathbf  1}}
\newcommand{\dss}{\displaystyle}
\newcommand{\inv}{^{-1}}
\newcommand{\pinv}{^{\dagger}}
\newcommand{\diag}[1]{\mathrm{diag}\left({#1}\right)}
\newcommand{\blockdiag}[1]{\mbox{\rm bdiag}\left({#1}\right)}
\newcommand{\tran}{^{\top}}
\newcommand{\inner}[1]{\langle {#1} \rangle}
\newcommand{\ped}[1]{_{\mathrm{#1}}}
\newcommand{\ap}[1]{^{\mathrm{#1}}}

\newcommand{\blu}[1]{\textcolor{blue}{#1}}
\newcommand{\red}[1]{\textcolor{red}{#1}}
\newcommand{\green}[1]{\textcolor{green}{#1}}
\newcommand{\cyan}[1]{\textcolor{cyan}{#1}}

\newcommand{\beq}{\begin{equation}}
\newcommand{\eeq}{\end{equation}}
\newcommand{\bea}{\begin{eqnarray}}
\newcommand{\eea}{\end{eqnarray}}
\newcommand{\beas}{\begin{eqnarray*}}
\newcommand{\eeas}{\end{eqnarray*}}
\newcommand{\ba}{\begin{array}}
\newcommand{\ea}{\end{array}}
\newcommand{\bit}{\begin{itemize}}
\newcommand{\eit}{\end{itemize}}
\newcommand{\ben}{\begin{enumerate}}
\newcommand{\een}{\end{enumerate}}
\newcommand{\bde}{\begin{description}}
\newcommand{\ede}{\end{description}}
\newcommand{\bsp}{\begin{split}}
\newcommand{\esp}{\end{split}}

\newtheorem{corollary}{Corollary}
\newtheorem{theorem}{Theorem}
\newtheorem{exercise}{Exercise}
\newtheorem{solution}{Solution}
\newtheorem{assumption}{Assumption}
\newtheorem{definition}{Definition}
\newtheorem{proposition}{Proposition}
\newtheorem{lemma}{Lemma}
\newtheorem{fact}{Fact}
\theoremstyle{remark}
\newtheorem*{remark}{Remark}

%
%

\def\nocolon{}

\newcommand{\monthyear}{%
  \ifcase\month\or January\or February\or March\or April\or May\or June\or
  July\or August\or September\or October\or November\or
  December\fi\space\number\year
}

\newcommand{\openepigraph}[2]{%
  \begin{fullwidth}
  \sffamily\large
  \begin{doublespace}
  \noindent\allcaps{#1}\\
  \noindent\allcaps{#2}
  \end{doublespace}
  \end{fullwidth}
}

\newcommand{\blankpage}{\newpage\hbox{}\thispagestyle{empty}\newpage}

\twocolumn[
\icmltitle{Easy Monotonic Policy Iteration}

\icmlauthor{Joshua Achiam}{jachiam@berkeley.edu}
\icmladdress{UC Berkeley}

\icmlkeywords{reinforcement learning, policy improvement bounds, policy iteration}

\vskip 0.3in
]

\begin{abstract} 
A key problem in reinforcement learning for control with general function approximators (such as deep neural networks and other nonlinear functions) is that, for many algorithms employed in practice, updates to the policy or $Q$-function may fail to improve performance---or worse, actually cause the policy performance to degrade. Prior work has addressed this for policy iteration by deriving tight policy improvement bounds; by optimizing the lower bound on policy improvement, a better policy is guaranteed. However, existing approaches suffer from bounds that are hard to optimize in practice because they include sup norm terms which cannot be efficiently estimated or differentiated. In this work, we derive a better policy improvement bound where the sup norm of the policy divergence has been replaced with an average divergence; this leads to an algorithm, Easy Monotonic Policy Iteration, that generates sequences of policies with guaranteed non-decreasing returns and is easy to implement in a sample-based framework. 
\end{abstract} 

\section{Introduction}

Following the success of the Deep Q-Network (DQN) approach \cite{Mnih2013}, there has been a surge of interest in using reinforcement learning for control with nonlinear function approximators, particularly with deep neural networks; in these methods, the policy or the Q-function is represented by the approximator. Examples include variants on DQN, such as Double-DQN \cite{VanHasselt2015} and Deep Recurrent Q-Learning \cite{Hausknecht2015}, as well as methods that mix neural network policies and neural network value functions, such as the asynchronous advantage actor-critic algorithm \cite{Mnih2016}. However, despite empirical successes, there are few algorithms that come with theoretical guarantees for continued policy improvement when training policies represented by arbitrary nonlinear function approximators.  

One approach, which serves as the inspiration for this work, seeks to maximize a lower bound on policy performance to guarantee an improvement. This method has its roots in conservative policy iteration (CPI) \cite{Kakade2002}, and was extended separately by Pirotta et~al. \yrcite{Pirotta2013} and Schulman et~al. \yrcite{Schulman2006}. Both Pirotta et~al. and Schulman et~al. derived similar policy improvement bounds consisting of two parts: an expected advantage of the new policy with respect to the old policy, and a penalty on a divergence between the new policy and the old policy. The divergence penalty, in both cases, is quite steep: it involves the maximum policy divergence over all states. This makes it particularly difficult to apply the bounds in the usual situations where function approximation is desirable: domains where the model---and hence the total set of states---is unknown (in which case it is not possible to evaluate the bound) and/or where the state space is large (in which case the bound may be unnecessarily conservative). Pirotta et~al. developed algorithms that primarily apply to the case where the model is known or where approximation is present only in the advantage estimation, and so did not address this issue. Schulman et~al. addressed the issue by proposing to solve an approximate form of the problem, where the maximum divergence penalty is replaced by a trust-region constraint on the average divergence; this algorithm is called trust region policy optimization (TRPO). They found TRPO worked quite well in a number of domains, successfully optimizing neural network policies to play Atari from raw pixels and to control a simulated robot in locomotion tasks. 

In this work, we derive a new policy improvement bound where the penalty on policy divergence goes as an average, rather than the maximum, divergence. This allows us to propose Easy Monotonic Policy Iteration (EMPI), an algorithm that exploits the bound to generate sequences of policies with guaranteed non-decreasing returns, which is easy to implement in a sample-based framework. It also enables us to give a new theoretical justification for TRPO: we are able to show that each iteration of TRPO has a worst-case degradation of policy performance which depends on a hyperparameter of the algorithm. Our contributions at present are entirely theoretical, but empirical results from testing EMPI will appear in a future version of this work.


\section{Preliminaries}

A Markov decision process is a tuple, ($S,A,R,P,\mu$), where $S$ is the set of states, $A$ is the set of actions, $R : S \times A \times S \to \Real{}$ is the reward function, $P : S \times A \times S \to [0,1]$ is the transition probability function (where $P(s'|s,a)$ is the probability of transitioning to state $s'$ given that the previous state was $s$ and the agent took action $a$ in $s$), and $\mu : S \to [0,1]$ is the starting state distribution. A policy $\pi: S \times A \to [0,1]$ is a distribution over actions per state, with $\pi(a|s)$ the probability of selecting $a$ in state $s$. We consider the problem of picking a policy $\pi$ that maximizes the expected infinite-horizon discounted total reward,
\begin{equation*}
J(\pi) \doteq \underset{\tau \sim \pi}{\E} \left[ \sum_{t=0}^{\infty} \gamma^t R(s_t,a_t,s_{t+1})\right],
\end{equation*}
where $\gamma \in [0,1)$ is the discount factor, $\tau$ denotes a trajectory ($\tau = (s_0, a_0, s_1, ...)$), and $\tau \sim \pi$ is shorthand for indicating that trajectories are drawn from distributions induced by $\pi$: $s_0 \sim \mu$, $a_t \sim \pi(\cdot|s_t)$, $s_{t+1} \sim P(\cdot | s_t, a_t)$. 

We define the on-policy value function, $V^{\pi} : S \to \Real{}$, action-value function $Q^{\pi} : S \times A \to \Real{}$, and advantage function $A^{\pi} : S \times A \to \Real{}$ in the usual way:
\begin{eqnarray*}
V^{\pi} (s) &\doteq& \underset{a_0, s_1,...}{\E} \left[ \left.\sum_{t=0}^{\infty} \gamma^t R_t \right| s_0 = s\right], \\
Q^{\pi} (s,a) &\doteq& \underset{s_1, a_1, ...}{\E} \left[ \left.\sum_{t=0}^{\infty} \gamma^t R_t \right| s_0 = s, a_0 = a\right], \\
A^{\pi} (s,a) &\doteq& Q^{\pi} (s,a) - V^{\pi} (s),
\end{eqnarray*}
where $R_t = R(s_t,a_t,s_{t+1})$. The $Q$ and $V$ functions are connected by
\begin{equation*}
Q^{\pi} (s,a) = \underset{s' \sim P(\cdot|s,a)}{\E} \left[ \left. R(s,a,s') + \gamma V^{\pi} (s') \right| s, a \right].
\end{equation*}

Our analysis will make extensive use of the discounted future state distribution, $d^{\pi}$, which is defined as
\begin{equation*}
d^{\pi} (s) = (1-\gamma) \sum_{t=0}^{\infty} \gamma^t P(s_t = s | \pi).
\end{equation*}
It allows us to express the expected discounted total reward  compactly as
\begin{equation}
J(\pi) = \frac{1}{1-\gamma} \underset{\begin{subarray}{c} s \sim d^{\pi} \\ a \sim \pi \\ s' \sim P\end{subarray}}{\E} \left[ R(s,a,s') \right], \label{jpi}
\end{equation}
where by $a \sim \pi$, we mean $a \sim \pi(\cdot|s)$, and by $s' \sim P$, we mean $s' \sim P(\cdot|s,a)$. We drop the explicit notation for the sake of reducing clutter, but it should be clear from context that $a$ and $s'$ depend on $s$. 

Next, we will examine some useful properties of $d^{\pi}$ that become apparent in vector form for finite state spaces. Let $p^t_{\pi} \in \Real{|S|}$ denote the vector with components $p^t_{\pi} (s) = P(s_t = s | \pi)$, and let $P_{\pi} \in \Real{|S|\times|S|}$ denote the transition matrix with components $P_{\pi} (s'|s) = \int da P(s'|s,a) \pi(a|s)$; then $p^t_{\pi} = P_{\pi} p^{t-1}_{\pi} = P^t_{\pi} \mu$ and 
\begin{eqnarray}
d^{\pi} &=& (1-\gamma) \sum_{t=0}^{\infty} (\gamma P_{\pi})^t \mu \nonumber \\
&=& (1-\gamma) (I - \gamma P_{\pi})^{-1} \mu. \label{dpi}
\end{eqnarray} 

This formulation helps us easily obtain the following lemma.

\begin{lemma} For any function $f : S \to \Real{}$ and any policy $\pi$,
\begin{equation}
(1-\gamma) \underset{s \sim \mu}{\E}\left[f(s)\right] + \underset{\begin{subarray}{c} s \sim d^{\pi} \\ a \sim \pi \\ s' \sim P\end{subarray}}{\E} \left[ \gamma f(s') \right] - \underset{s \sim d^{\pi}}{\E}\left[f(s)\right] = 0. 
\end{equation}
\end{lemma}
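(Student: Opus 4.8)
The plan is to work in the finite-state vector notation introduced immediately above the statement, since the claim is linear in $f$ and the general (continuous) case follows either by a routine limiting argument or by repeating the same steps with sums replaced by integrals. Identify $f$ with the column vector in $\Real{|S|}$ whose $s$-th entry is $f(s)$, and recall that, with the convention $p^t_\pi = P_\pi p^{t-1}_\pi$, advancing a state distribution by one time step under $\pi$ is left-multiplication by $P_\pi$.

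First I would rewrite each of the three terms as an inner product with $f$. The first term is simply $(1-\gamma)\, f\tran \mu$. For the third term, $\E_{s\sim d^\pi}[f(s)] = f\tran d^\pi$. For the middle term, observe that if $s\sim d^\pi$ and $s'$ is generated by one environment step ($a\sim\pi(\cdot|s)$, then $s'\sim P(\cdot|s,a)$), the law of $s'$ is the vector $P_\pi d^\pi$, so the middle term equals $\gamma\, f\tran P_\pi d^\pi$. Hence the left-hand side collapses to $(1-\gamma)\, f\tran \mu + f\tran(\gamma P_\pi - I)\, d^\pi$. Next I would substitute the closed form $d^\pi = (1-\gamma)(I-\gamma P_\pi)^{-1}\mu$ from \eqref{dpi}: then $(\gamma P_\pi - I)\, d^\pi = -(I-\gamma P_\pi)(1-\gamma)(I-\gamma P_\pi)^{-1}\mu = -(1-\gamma)\mu$, so $f\tran(\gamma P_\pi - I)\, d^\pi = -(1-\gamma)\, f\tran\mu$, which cancels the first term exactly and yields $0$.

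The only delicate point is keeping the orientation convention for $P_\pi$ consistent with $p^t_\pi = P_\pi p^{t-1}_\pi$ (i.e.\ whether distributions are treated as row or column vectors, and hence whether $s'$ has law $P_\pi d^\pi$ or $P_\pi\tran d^\pi$); this is bookkeeping rather than a genuine obstacle. As a convention-free alternative that also handles infinite state spaces directly, I would instead expand $\E_{s\sim d^\pi}[f(s)] = (1-\gamma)\sum_{t\ge 0}\gamma^t \E[f(s_t)\mid\pi]$, note that the middle term reindexes to $(1-\gamma)\sum_{t\ge 1}\gamma^t \E[f(s_t)\mid\pi]$, and subtract: the difference telescopes to $-(1-\gamma)\E[f(s_0)\mid\pi] = -(1-\gamma)\E_{s\sim\mu}[f(s)]$, again cancelling the first term. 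This version needs only absolute convergence of the series (e.g.\ $f$ bounded).
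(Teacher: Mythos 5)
Your proposal is correct and is essentially the paper's own argument: multiplying the identity $d^{\pi} = (1-\gamma)(I-\gamma P_{\pi})^{-1}\mu$ by $(I-\gamma P_{\pi})$ and pairing with $f$, which is exactly what your substitution $(\gamma P_{\pi}-I)\,d^{\pi} = -(1-\gamma)\mu$ amounts to (you just spell out the identification of each expectation as an inner product, and add an optional telescoping-series variant). No gaps.
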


\begin{proof} Multiply both sides of (\ref{dpi}) by $(I - \gamma P_{\pi})$ and take the inner product with the vector $f \in \Real{|S|}$. 
\end{proof}

Combining this with (\ref{jpi}), we obtain the following, for any function $f$ and any policy $\pi$:
\begin{equation}
\begin{aligned}
J(\pi) = &\underset{s \sim \mu}{\E}[f(s)] \\
&+ \frac{1}{1-\gamma} \underset{\begin{subarray}{c} s \sim d^{\pi} \\ a \sim \pi \\ s' \sim P\end{subarray}}{\E} \left[R(s,a,s') + \gamma f(s') - f(s)\right]. \label{jpif}
\end{aligned}
\end{equation}

This identity is nice for two reasons. First: if we pick $f$ to be an approximator of the value function $V^{\pi}$, then (\ref{jpif}) relates the true discounted return of the policy ($J(\pi)$) to the estimate of the policy return ($\E_{s\sim \mu}[f(s)]$) and to the on-policy average TD-error of the approximator; this is aesthetically satisfying. Second: it shows that reward-shaping by $\gamma f(s') - f(s)$ has the effect of translating the total discounted return by $\E_{s\sim \mu} [f(s)]$, a fixed constant independent of policy; this illustrates the finding of Ng. et~al. \yrcite{Ng1999} that reward shaping by $\gamma f(s') + f(s)$ does not change the optimal policy.

It is also helpful to introduce an identity for the vector difference of the discounted future state visitation distributions on two different policies, $\pi'$ and $\pi$. Define the matrices $G \doteq (I - \gamma P_{\pi})^{-1}$, $\bar{G} \doteq (I - \gamma P_{\pi'})^{-1}$, and $\Delta = P_{\pi'} - P_{\pi}$. Then:
\begin{eqnarray*}
G^{-1} - \bar{G}^{-1} &=& (I - \gamma P_{\pi}) - (I - \gamma P_{\pi'}) \\
&=& \gamma \Delta;
\end{eqnarray*} 
left-multiplying by $G$ and right-multiplying by $\bar{G}$, we obtain
\begin{equation*}
\bar{G}  - G = \gamma \bar{G} \Delta G. 
\end{equation*}

Thus
\begin{eqnarray}
d^{\pi'} - d^{\pi} &=& (1-\gamma) \left(\bar{G} - G\right) \mu \nonumber \\
&=& \gamma (1-\gamma) \bar{G} \Delta G \mu \nonumber \\
&=& \gamma \bar{G} \Delta d^{\pi}. \label{dpidiff}
\end{eqnarray}

For simplicity in what follows, we will only consider MDPs with finite state and action spaces, although our attention is on MDPs that are too large for tabular methods. 

\section{Main Results}

In this section, we will derive and present the new policy improvement bound. We will begin with a lemma:

\begin{lemma}\label{policybound0} For any function $f: S \to \Real{}$ and any policies $\pi'$ and $\pi$, define
\begin{equation}
\begin{aligned}
&L_{\pi,f} (\pi') \doteq \\
& \underset{\begin{subarray}{c} s \sim d^{\pi} \\ a \sim \pi \\ s' \sim P\end{subarray}}{\E} \left[ \left(\frac{\pi'(a|s)}{\pi(a|s)} - 1 \right) \left(R(s,a,s') + \gamma f(s') - f(s) \right)\right],
\end{aligned} \label{surrogate}
\end{equation}
and $\epsilon_f^{\pi'} \doteq \max_s \left| \E_{a \sim \pi', s'\sim P} [R(s,a,s') + \gamma f(s') - f(s)] \right|$. Then the following bound holds:
\begin{equation}
J(\pi') - J(\pi) \geq \frac{1}{1-\gamma}\left(L_{\pi,f} (\pi') - 2\epsilon_f^{\pi'} D_{TV} (d^{\pi'} || d^{\pi})\right), \label{bound0}
\end{equation}
where $D_{TV}$ is the total variational divergence. Furthermore, the bound is tight (when $\pi' = \pi$, the LHS and RHS are identically zero). 
\end{lemma}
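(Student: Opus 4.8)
The plan is to use identity (\ref{jpif}) directly, applied to both policies. Abbreviate the shaped reward as $\delta_f(s,a,s') \doteq R(s,a,s') + \gamma f(s') - f(s)$ and write $\bar\delta_f^{\pi}(s) \doteq \E_{a\sim\pi,\,s'\sim P}[\delta_f(s,a,s')]$, so that $\epsilon_f^{\pi'} = \max_s |\bar\delta_f^{\pi'}(s)| = \|\bar\delta_f^{\pi'}\|_\infty$. Evaluating (\ref{jpif}) at $\pi'$ and at $\pi$ and subtracting, the $\E_{s\sim\mu}[f(s)]$ terms cancel, leaving
\[
(1-\gamma)\big(J(\pi') - J(\pi)\big) = \E_{s\sim d^{\pi'}}\!\big[\bar\delta_f^{\pi'}(s)\big] - \E_{s\sim d^{\pi}}\!\big[\bar\delta_f^{\pi}(s)\big].
\]

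Next I would rewrite the surrogate. Since $\E_{a\sim\pi}[(\pi'(a|s)/\pi(a|s))\,g(a)] = \E_{a\sim\pi'}[g(a)]$ (assuming, as is standard, that $\pi'$ is absolutely continuous with respect to $\pi$), the definition (\ref{surrogate}) collapses to
\[
L_{\pi,f}(\pi') = \E_{s\sim d^{\pi}}\!\big[\bar\delta_f^{\pi'}(s) - \bar\delta_f^{\pi}(s)\big].
\]
Subtracting this from the previous display, the $\E_{s\sim d^\pi}[\bar\delta_f^{\pi}(s)]$ terms cancel and we obtain the exact identity
\[
(1-\gamma)\big(J(\pi') - J(\pi)\big) - L_{\pi,f}(\pi') = \sum_s \big(d^{\pi'}(s) - d^{\pi}(s)\big)\,\bar\delta_f^{\pi'}(s).
\]

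The last step is to control the right-hand side. By H\"older's inequality it is bounded in absolute value by $\|d^{\pi'} - d^{\pi}\|_1 \cdot \|\bar\delta_f^{\pi'}\|_\infty$. The second factor is $\epsilon_f^{\pi'}$ by definition, and the first equals $2\,D_{TV}(d^{\pi'}\|d^{\pi})$ because the total variational divergence between two distributions is half the $\ell_1$ distance between them. Rearranging the resulting two-sided inequality and dividing by $1-\gamma$ gives (\ref{bound0}). Tightness at $\pi' = \pi$ is immediate: the importance ratio is then identically $1$, so $L_{\pi,f}(\pi) = 0$; the discounted state distributions coincide, so $D_{TV}(d^{\pi}\|d^{\pi}) = 0$; and $J(\pi) - J(\pi) = 0$.

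There is no serious obstacle here --- the argument is short. The only things to get right are the bookkeeping of which expectation terms cancel (which hinges on the importance-sampling rewrite that makes $\bar\delta_f^{\pi'}$ appear as the integrand paired against $d^{\pi'}-d^{\pi}$) and the factor of two relating the $\ell_1$ norm to $D_{TV}$. The conceptual point is that (\ref{jpif}) concentrates the entire dependence on the off-policy state distribution into a single term, reducing the problem to comparing $d^{\pi'}$ and $d^{\pi}$ against a fixed bounded function; identity (\ref{dpidiff}) is not needed for this lemma, though it would be the natural next ingredient for re-expressing $D_{TV}(d^{\pi'}\|d^{\pi})$ as a penalty on the policies themselves.
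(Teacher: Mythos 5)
Your proof is correct and follows essentially the same route as the paper: both use identity (\ref{jpif}) for $\pi'$ and $\pi$, use the importance-sampling rewrite to isolate the term $\left\langle d^{\pi'} - d^{\pi}, \bar{\delta}_f^{\pi'} \right\rangle$, and bound it via H\"older's inequality with $p=1$, $q=\infty$ together with $\|d^{\pi'}-d^{\pi}\|_1 = 2 D_{TV}(d^{\pi'}\|d^{\pi})$. The only cosmetic difference is that you package the cancellation as an exact identity before bounding (and note the two-sided version), whereas the paper bounds the $d^{\pi'}$-expectation term directly and then regroups; the substance is identical.
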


\begin{proof} First, for notational convenience, let $\delta_f (s,a,s') \doteq R(s,a,s') + \gamma f(s') - f(s)$. (The choice of $\delta$ to denote this quantity is intentionally suggestive---this bears a strong resemblance to a TD-error.) By (\ref{jpif}), we obtain the identity
\begin{equation*}
\begin{aligned}
&J(\pi') - J(\pi) \\
&= \frac{1}{1-\gamma} \left(\underset{\begin{subarray}{c} s \sim d^{\pi'} \\ a \sim \pi' \\ s' \sim P\end{subarray}}{\E} \left[ \delta_f (s,a,s') \right] - \underset{\begin{subarray}{c} s \sim d^{\pi} \\ a \sim \pi \\ s' \sim P\end{subarray}}{\E} \left[ \delta_f(s,a,s') \right].\right)
\end{aligned}
\end{equation*}

Now, we restrict our attention to the first term in this equation. Let $\bar{\delta}_f^{\pi'} \in \Real{|S|}$ denote the vector of components $\bar{\delta}_f^{\pi'} (s) = \E_{a \sim \pi', s' \sim P} [\delta_f(s,a,s') |s]$.  Observe that
\begin{eqnarray*}
\begin{aligned}
\underset{\begin{subarray}{c} s \sim d^{\pi'} \\ a \sim \pi' \\ s' \sim P\end{subarray}}{\E} \left[ \delta_f (s,a,s') \right] & = \left\langle d^{\pi'}, \bar{\delta}_f^{\pi'} \right\rangle\\
& = \left\langle d^{\pi}, \bar{\delta}_f^{\pi'} \right\rangle + \left\langle d^{\pi'} - d^{\pi}, \bar{\delta}_f^{\pi'} \right\rangle
\end{aligned}
\end{eqnarray*}

This term is then straightforwardly bounded by applying H\"{o}lder's inequality; for any $p,q \in [1, \infty]$ such that $1/p + 1/q = 1$, we have
\begin{equation*}
\underset{\begin{subarray}{c} s \sim d^{\pi'} \\ a \sim \pi' \\ s' \sim P\end{subarray}}{\E} \left[ \delta_f (s,a,s') \right] \geq  \left\langle d^{\pi}, \bar{\delta}_f^{\pi'} \right\rangle - \left\| d^{\pi'} - d^{\pi}\right\|_p \left\|\bar{\delta}_f^{\pi'} \right\|_q. 
\end{equation*}

Particularly, we choose $p=1$ and $q = \infty$; however, we believe that this step is very interesting, and different choices for dealing with the inner product $\left\langle d^{\pi'} - d^{\pi}, \bar{\delta}_f^{\pi'} \right\rangle$ may lead to novel and useful bounds. 

With $\left\| d^{\pi'} - d^{\pi}\right\|_1 = 2 D_{TV} (d^{\pi'} || d^{\pi})$ and $\left\|\bar{\delta}_f^{\pi'} \right\|_{\infty} = \epsilon_f^{\pi'}$, the bound is almost obtained. The last step is to observe that, by the importance sampling identity,
\begin{eqnarray*}
\left\langle d^{\pi}, \bar{\delta}_f^{\pi'} \right\rangle &=& \underset{\begin{subarray}{c} s \sim d^{\pi} \\ a \sim \pi' \\ s' \sim P\end{subarray}}{\E} \left[ \delta_f (s,a,s') \right] \\
&=& \underset{\begin{subarray}{c} s \sim d^{\pi} \\ a \sim \pi \\ s' \sim P\end{subarray}}{\E} \left[\left( \frac{\pi'(a|s)}{\pi(a|s)} \right)\delta_f (s,a,s') \right]. 
\end{eqnarray*}

After grouping terms, the bound is obtained. 
\end{proof}

This lemma makes use of many ideas that have been explored before; for the special case of $f = V^{\pi}$, this strategy (after bounding $D_{TV} (d^{\pi'} || d^{\pi})$) leads directly to some of the policy improvement bounds previously obtained by Pirotta et~al. and Schulman et~al. The form given here is more general, however, because it allows for freedom in choosing $f$. (Although we do not report the results here, we note that this freedom allows for the derivation of analogous bounds involving Bellman errors of Q-function approximators, which is interesting and suggestive.)

\begin{remark}
It is reasonable to ask if there is a choice of $f$ which maximizes the lower bound here. This turns out to trivially be $f = V^{\pi'}$. Observe that $\E_{s' \sim P} \left[\delta_{V^{\pi'}}(s,a,s') | s,a\right] = A^{\pi'} (s,a)$. For all states, $\E_{a \sim \pi'} [A^{\pi'} (s,a)] = 0$ (by the definition of $A^{\pi'}$), thus $\bar{\delta}^{\pi'}_{V^{\pi'}} = 0$ and $\epsilon_{V^{\pi'}}^{\pi'} = 0$. Also, $L_{\pi, V^{\pi'}} (\pi') = -\E_{s \sim d^{\pi}, a \sim \pi} \left[A^{\pi'}(s,a)\right]$; from (\ref{jpif}) with $f = V^{\pi'}$, we can see that this exactly equals $J(\pi') - J(\pi)$. Thus, for $f = V^{\pi'}$, we recover an exact equality. While this is not practically useful to us (because, when we want to optimize a lower bound with respect to $\pi'$, it is too expensive to evaluate $V^{\pi'}$ for each candidate to be practical), it provides insight: the penalty coefficient on the divergence captures information about the mismatch between $f$ and $V^{\pi'}$. 
\end{remark}

Next, we are interested in bounding the divergence term, $\|d^{\pi'} - d^{\pi}\|_1$. We give the following lemma; to the best of our knowledge, this is a new result. 

\begin{lemma}\label{divergencebound}
The divergence between discounted future state visitation distributions, $\|d^{\pi'} - d^{\pi}\|_1$, is bounded by an average divergence of the policies $\pi'$ and $\pi$:
\begin{equation}
\|d^{\pi'} - d^{\pi}\|_1 \leq \frac{2\gamma}{1-\gamma} \underset{s \sim d^{\pi}}{\E} \left[ D_{TV} (\pi' || \pi)[s]\right], 
\end{equation}
where $D_{TV} (\pi'||\pi)[s] = (1/2) \sum_a |\pi'(a|s) - \pi(a|s)|$.  
\end{lemma}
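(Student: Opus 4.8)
The plan is to start from the exact vector identity (\ref{dpidiff}), $d^{\pi'} - d^{\pi} = \gamma \bar{G} \Delta d^{\pi}$ with $\bar{G} = (I - \gamma P_{\pi'})^{-1}$ and $\Delta = P_{\pi'} - P_{\pi}$, and to bound its $\ell_1$ norm by peeling off one factor at a time using submultiplicativity of the induced $\ell_1 \to \ell_1$ operator norm: $\|d^{\pi'} - d^{\pi}\|_1 \le \gamma \|\bar{G}\|_1 \|\Delta d^{\pi}\|_1$. The two remaining pieces, $\|\bar{G}\|_1$ and $\|\Delta d^{\pi}\|_1$, are then each bounded by elementary arguments, and multiplying them (with the leading $\gamma$) gives exactly the claimed $\frac{2\gamma}{1-\gamma}$ coefficient.

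First I would bound $\|\bar{G}\|_1$. Writing $\bar{G} = \sum_{t=0}^{\infty} (\gamma P_{\pi'})^t$ as in (\ref{dpi}) and noting that $P_{\pi'}$ is a column-stochastic matrix with nonnegative entries, its induced $\ell_1$ norm (the maximum absolute column sum) is exactly $1$; hence $\|\bar{G}\|_1 \le \sum_{t=0}^{\infty} \gamma^t \|P_{\pi'}\|_1^t = \frac{1}{1-\gamma}$.

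Next I would bound $\|\Delta d^{\pi}\|_1$ directly in coordinates. Expanding $(\Delta d^{\pi})(s') = \sum_s \bigl(P_{\pi'}(s'|s) - P_{\pi}(s'|s)\bigr) d^{\pi}(s)$ and applying the triangle inequality pulls the nonnegative weights $d^{\pi}(s)$ outside, giving $\|\Delta d^{\pi}\|_1 \le \sum_s d^{\pi}(s) \sum_{s'} |P_{\pi'}(s'|s) - P_{\pi}(s'|s)|$. Then I would use $P_{\pi'}(s'|s) - P_{\pi}(s'|s) = \int da\,\bigl(\pi'(a|s) - \pi(a|s)\bigr) P(s'|s,a)$, apply the triangle inequality once more to move the absolute value inside the $a$-integral, swap the order of summation over $s'$ and integration over $a$, and use $\sum_{s'} P(s'|s,a) = 1$ to conclude $\sum_{s'} |P_{\pi'}(s'|s) - P_{\pi}(s'|s)| \le \int da\,|\pi'(a|s) - \pi(a|s)| = 2 D_{TV}(\pi'||\pi)[s]$. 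This yields $\|\Delta d^{\pi}\|_1 \le 2\,\E_{s\sim d^{\pi}}\bigl[D_{TV}(\pi'||\pi)[s]\bigr]$, and combining with the bound on $\|\bar{G}\|_1$ finishes the proof.

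Every step is routine; the only points needing care are the convention-dependent claim $\|P_{\pi'}\|_1 = 1$ (it is the maximum column sum of absolute entries, which is $1$ because each column of $P_{\pi'}$ is a probability distribution over next states) and arranging the two triangle-inequality applications so that the nonnegative weights $d^{\pi}(s)$ and $P(s'|s,a)$ are the ones that get summed away. So the ``hard part'' here is bookkeeping about stochastic matrices and norm conventions rather than any real conceptual obstacle.
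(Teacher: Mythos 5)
Your proposal is correct and follows essentially the same route as the paper: it starts from the identity $d^{\pi'} - d^{\pi} = \gamma \bar{G} \Delta d^{\pi}$, uses submultiplicativity of the induced $\ell_1$ norm with the Neumann-series bound $\|\bar{G}\|_1 \leq (1-\gamma)^{-1}$, and bounds $\|\Delta d^{\pi}\|_1$ coordinate-wise exactly as in the paper's proof. Your explicit remark that $\|P_{\pi'}\|_1 = 1$ because the matrix is column-stochastic under the paper's convention is a small but welcome clarification of a step the paper leaves implicit.
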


\begin{proof}
First, using (\ref{dpidiff}), we obtain
\begin{eqnarray*}
\|d^{\pi'} - d^{\pi}\|_1 &=& \gamma \|\bar{G} \Delta d^{\pi}\|_1 \\
&\leq & \gamma \|\bar{G}\|_1 \|\Delta d^{\pi}\|_1.
\end{eqnarray*}
$\|\bar{G}\|_1$ is bounded by:
%
\begin{equation*}
\|\bar{G}\|_1 = \|(I - \gamma P_{\pi'})^{-1}\|_1 \leq \sum_{t=0}^{\infty} \gamma^t \left\|P_{\pi'}\right\|_1^t = (1-\gamma)^{-1}
\end{equation*}

To conclude the lemma, we bound $\|\Delta d^{\pi}\|_1$. 
\begin{eqnarray*}
\|\Delta d^{\pi}\|_1 &=& \sum_{s'} \left| \sum_s \Delta(s'|s) d^{\pi}(s) \right| \\
&\leq& \sum_{s,s'} \left| \Delta(s'|s)\right| d^{\pi}(s) \\
&=& \sum_{s,s'} \left| \sum_a P(s'|s,a) \left(\pi'(a|s) - \pi(a|s) \right)\right| d^{\pi}(s) \\
&\leq& \sum_{s,a,s'} P(s'|s,a) \left|\pi'(a|s) - \pi(a|s) \right| d^{\pi}(s) \\
&=& \sum_{s,a} \left|\pi'(a|s) - \pi(a|s) \right| d^{\pi}(s) \\
&=& 2 \underset{s \sim d^{\pi}}{\E} \left[ D_{TV} (\pi'||\pi)[s] \right].
\end{eqnarray*} 
\end{proof}

The new policy improvement bound follows immediately. 

\begin{theorem}\label{maintheorem}For any function $f : S \to \Real{}$ and any policies $\pi',\pi$, with $L_{\pi,f}(\pi')$ as defined in (\ref{surrogate}) and $\epsilon_f^{\pi'} \doteq \max_s \left| \E_{a \sim \pi', s'\sim P} [R(s,a,s') + \gamma f(s') - f(s)] \right|$, the following bound holds:
\begin{equation}
\begin{aligned}
&J(\pi') - J(\pi) \\
&\geq \frac{1}{1-\gamma}\left(L_{\pi,f} (\pi') - \frac{2\gamma \epsilon_f^{\pi'}}{1-\gamma} \underset{s \sim d^{\pi}}{\E} \left[ D_{TV} (\pi'||\pi)[s] \right] \right). \label{bound1}
\end{aligned}
\end{equation}
Furthermore, the bound is tight (when $\pi' = \pi$, the LHS and RHS are identically zero). 
\end{theorem}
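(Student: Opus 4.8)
The plan is to simply chain Lemma~\ref{policybound0} with Lemma~\ref{divergencebound}. Lemma~\ref{policybound0} already gives
\[
J(\pi') - J(\pi) \geq \frac{1}{1-\gamma}\left(L_{\pi,f}(\pi') - 2\epsilon_f^{\pi'} D_{TV}(d^{\pi'} \| d^{\pi})\right),
\]
so all that remains is to replace the term $D_{TV}(d^{\pi'}\|d^{\pi})$ by the average-policy-divergence expression appearing in the statement. First I would use the identity $\|d^{\pi'} - d^{\pi}\|_1 = 2 D_{TV}(d^{\pi'}\|d^{\pi})$ (the same relation invoked in the proof of Lemma~\ref{policybound0}) to rewrite Lemma~\ref{divergencebound} as
\[
D_{TV}(d^{\pi'}\|d^{\pi}) \leq \frac{\gamma}{1-\gamma}\,\underset{s \sim d^{\pi}}{\E}\left[D_{TV}(\pi'\|\pi)[s]\right].
\]

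The one point that needs a word of care is the direction of the inequality: since $\epsilon_f^{\pi'}$ is defined as a maximum of absolute values it satisfies $\epsilon_f^{\pi'} \geq 0$, so multiplying the displayed upper bound on $D_{TV}(d^{\pi'}\|d^{\pi})$ by $-2\epsilon_f^{\pi'}$ reverses it into a valid lower bound, and substituting into the conclusion of Lemma~\ref{policybound0} can only make the right-hand side smaller. Carrying out this substitution and collecting the factor $2\epsilon_f^{\pi'} \cdot \gamma/(1-\gamma) = 2\gamma\epsilon_f^{\pi'}/(1-\gamma)$ inside the outer $1/(1-\gamma)$ yields exactly \eqref{bound1}.

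For tightness, I would observe directly that setting $\pi' = \pi$ makes $L_{\pi,f}(\pi) = 0$ (the importance weight $\pi'(a|s)/\pi(a|s) - 1$ vanishes pointwise) and $D_{TV}(\pi\|\pi)[s] = 0$ for every $s$, so the right-hand side of \eqref{bound1} is zero; the left-hand side is trivially zero as well, so the bound holds with equality. There is essentially no obstacle here — the content was in establishing the two lemmas, and this theorem is their immediate corollary; the only thing to get right is the sign bookkeeping in the substitution step described above.
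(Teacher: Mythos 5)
Your proposal is correct and follows exactly the paper's argument: the theorem is obtained by substituting the bound of Lemma~\ref{divergencebound} (converted from the $\ell_1$ norm to $D_{TV}$ via the factor of $2$) into the conclusion of Lemma~\ref{policybound0}, with the sign handled as you describe, and the tightness check at $\pi'=\pi$ matches as well.
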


\begin{proof} Begin with the bound of lemma \ref{policybound0} and bound the divergence $D_{TV} (d^{\pi'} || d^{\pi})$ by lemma \ref{divergencebound}.
\end{proof}

A few quick observations connect this result to prior work. Clearly, we could bound the expectation $\E_{s \sim d^{\pi}}\left[D_{TV} (\pi' || \pi)[s]\right]$ by $\max_s D_{TV} (\pi'||\pi)[s]$. Doing this, picking $f = V^{\pi}$, and bounding $\epsilon_{V^{\pi}}^{\pi'}$ to get a second factor of $\max_s D_{TV} (\pi'||\pi)[s]$, we recover (up to assumption-dependent factors) the policy improvement bounds given by Pirotta et~al. \yrcite{Pirotta2013} as Corollary 3.6, and by Schulman et~al. \yrcite{Schulman2006} as Theorem 1a. 

Because the choice of $f = V^{\pi}$ does allow for a nice simplification, we will give the bound with this choice as a corollary.

\begin{restatable}{corollary}{advantagebound} 
For any policies $\pi', \pi$, with $\epsilon^{\pi'} \doteq \max_s | \E_{a \sim \pi'} [A^{\pi} (s,a) ] |$, the following bound holds:
\begin{equation}
\begin{aligned}
&J(\pi') - J(\pi)\\
& \geq \frac{1}{1-\gamma} \underset{\begin{subarray}{c} s \sim d^{\pi} \\ a \sim \pi' \end{subarray}}{\E} \left[ A^{\pi} (s,a) - \frac{2\gamma \epsilon^{\pi'}}{1-\gamma}  D_{TV} (\pi'||\pi)[s] \right]. \label{bound2} 
\end{aligned}
\end{equation}
\label{advantage-bound}
\end{restatable}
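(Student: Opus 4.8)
The plan is to specialize Theorem~\ref{maintheorem} to the choice $f = V^{\pi}$ and simplify each of its two terms. With this choice the shaped reward $\delta_{V^{\pi}}(s,a,s') \doteq R(s,a,s') + \gamma V^{\pi}(s') - V^{\pi}(s)$ has the property that its expectation over the next state is exactly the advantage, $\E_{s' \sim P}[\delta_{V^{\pi}}(s,a,s') \mid s,a] = Q^{\pi}(s,a) - V^{\pi}(s) = A^{\pi}(s,a)$, so every appearance of $\delta_{V^{\pi}}$ will collapse to $A^{\pi}$ once the $s'$-expectation is pushed through.

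First I would rewrite $L_{\pi, V^{\pi}}(\pi')$ by splitting the factor $\pi'(a|s)/\pi(a|s) - 1$ in (\ref{surrogate}) into its two pieces. For the $\pi'(a|s)/\pi(a|s)$ piece I would apply the importance-sampling identity used at the end of the proof of Lemma~\ref{policybound0} and then take the inner expectation over $s'$, obtaining $\E_{s \sim d^{\pi},\, a \sim \pi'}[A^{\pi}(s,a)]$. For the $-1$ piece I would take the $s'$-expectation first, getting $-\E_{s \sim d^{\pi},\, a \sim \pi}[A^{\pi}(s,a)]$, which vanishes because $\E_{a \sim \pi}[A^{\pi}(s,a)] = 0$ for every state $s$. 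Hence $L_{\pi, V^{\pi}}(\pi') = \E_{s \sim d^{\pi},\, a \sim \pi'}[A^{\pi}(s,a)]$. Next I would simplify the penalty coefficient: by the same $s'$-collapse, $\epsilon_{V^{\pi}}^{\pi'} = \max_s \left| \E_{a \sim \pi',\, s' \sim P}[\delta_{V^{\pi}}(s,a,s')] \right| = \max_s \left| \E_{a \sim \pi'}[A^{\pi}(s,a)] \right| = \epsilon^{\pi'}$.

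Finally I would substitute these two identities into the bound (\ref{bound1}). Both remaining terms are expectations over $s \sim d^{\pi}$ (the advantage term once the inner action/next-state expectations are taken, the divergence term as stated), so they can be merged into a single expectation over $s \sim d^{\pi},\, a \sim \pi'$, which yields exactly (\ref{bound2}); tightness at $\pi' = \pi$ is inherited from Theorem~\ref{maintheorem}.

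I do not anticipate a genuine obstacle: the argument is a short specialization, and the only points that require care are (i) correctly commuting the $s'$-expectation so that $\delta_{V^{\pi}}$ reduces to $A^{\pi}$ in both $L$ and $\epsilon$, and (ii) the cancellation of the $-1$ term via $\E_{a \sim \pi}[A^{\pi}(s,a)] = 0$. This is the same style of computation sketched in the remark following Lemma~\ref{policybound0}, but carried out with $f = V^{\pi}$ rather than $f = V^{\pi'}$, so that the importance-sampled term survives instead of cancelling.
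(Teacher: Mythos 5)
Your proof is correct and follows exactly the route the paper intends for Corollary~\ref{advantage-bound}: specialize Theorem~\ref{maintheorem} to $f = V^{\pi}$, collapse $\delta_{V^{\pi}}$ to $A^{\pi}$ via the $s'$-expectation, kill the $-1$ term using $\E_{a\sim\pi}[A^{\pi}(s,a)]=0$, and identify $\epsilon_{V^{\pi}}^{\pi'}$ with $\epsilon^{\pi'}$. The paper leaves these simplifications implicit, and your write-up supplies them correctly.
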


\section{Easy Monotonic Policy Iteration}

As with the weaker versions of the bound, this bound allows us to generate a sequence of policies with non-degrading performance in any restricted class of policies, i.e. policies that are represented by neural networks or other function approximators. We'll use $\Pi_{\theta}$ to denote an arbitrary restricted class of policies, and understand that this usually means policies smoothly parametrized by some set of parameters $\theta$. Algorithm \ref{alg1} gives the general template for Easy Monotonic Policy Iteration (EMPI), which obtains monotonic improvements using (\ref{bound1}) (and one additional small step of bounding to remove $\pi'$ from the penalty coefficient). To see that EMPI is indeed monotonic, observe that $\pi_i$ is a feasible point of the optimization defining $\pi_{i+1}$, with objective value 0; this is a certificate that the objective at optimum is $\geq 0$. 

Although we do not specify how to solve the optimization problem, we note that the objective is differentiable with respect to the parameters of a candidate policy $\pi'$: as a result, a gradient-based method can be used. Furthermore, when we use neural network policies where the vector of $n$ parameters may take on any value in $\Real{n}$, the optimization is unconstrained. This is one sense in which we consider this algorithm `easy' to implement. 

\begin{algorithm}[tb]
   \caption{Easy Monotonic Policy Iteration: monotonic policy improvements in arbitrary policy classes}
   \label{alg1}
\begin{algorithmic}
   \STATE {\bfseries Input:} Initial policy $\pi_0 \in \Pi_{\theta}$, max number of iterations $N$, stopping tolerance $\alpha$
	 \FOR{$i = 1,2,...,N$ or until $J(\pi_i) - J(\pi_{i-1}) \leq \alpha$} 
	 \STATE Choose function $f_i :S \to \Real{}$
	 \STATE $\pi_{i+1} \leftarrow \arg \underset{\pi' \in \Pi_{\theta}}{\max} L_{\pi_i,f_i} (\pi') - C \underset{s \sim d^{\pi_{i}}}{\E}\left[D_{TV} (\pi' || \pi_i)\right]$, \\
	where $C = \frac{2\gamma\epsilon}{1-\gamma}$, \\
	and $\epsilon = \max_{s,a} \left|\E_{s' \sim P} [R(s,a,s') + \gamma f_i(s') - f_i(s)] \right|$. 
	\ENDFOR
\end{algorithmic}
\end{algorithm}

Usually we would be interested in applying this method to problems with large state or unknown state spaces, where exact calculation of the objective function is not feasible. Because the objective is defined almost entirely in terms of expectations on policy $\pi_i$, however, we can use a sample-based approximation of it; this is the other sense in which this algorithm is `easy.' The only challenge is estimating $\epsilon$, where we might apply a worst-case bound (potentially making the policy step too conservative) or a reasonable heuristic bound (potentially permitting degraded policy performance). 

\subsection{Implementing EMPI}

A practical form of the general algorithm is obtained by choosing a particular form for the reward-shaping functions $f_i$, possibly approximating terms in the objective, and replacing the expectations in the objective by sample estimates. We'll choose $f_i = V^{\pi_i}$, so that the base optimization problem becomes
\begin{equation*}
\begin{aligned}
\max_{\pi' \in \Pi_{\theta}} &&& \underset{\begin{subarray}{c} s \sim d^{\pi} \\ a \sim \pi' \end{subarray}}{\E} \left[ A^{\pi} (s,a) - \frac{2\gamma \epsilon}{1-\gamma} D_{TV} (\pi' || \pi)[s] \right], \\
&&& \text{where } \epsilon = \max_{s,a} |A^{\pi} (s,a)|. 
\end{aligned}
\end{equation*}
Suppose that we use an estimator of the advantage, $\hat{A}^{\pi}(s,a)$, instead of the true advantage $A^{\pi} (s,a)$. For example, if we have learned a neural network value function approximator $\hat{V}^{\pi}$, we may use $\hat{A}^{\pi} (s,a) = R(s,a,s') + \gamma \hat{V}^{\pi} (s') - \hat{V}^{\pi} (s)$; or perhaps we might use the generalized advantage estimator \cite{Schulman2015}. Observe that, because $\E_{a \sim \pi} [A^{\pi}(s,a)] = 0$, for every state $s$ we have
\begin{equation*}
\begin{aligned}
\underset{a \sim \pi'}{\E} &\left[A^{\pi}(s,a) \right] = \\ 
& \underset{a \sim \pi'}{\E} \left[\hat{A}^{\pi} (s,a) \right] - \underset{a \sim \pi}{\E} \left[\hat{A}^{\pi} (s,a) \right]\\
& + \sum_{a} \left( \pi'(a|s) - \pi(a|s) \right) \left( A^{\pi} (s,a) - \hat{A}^{\pi} (s,a) \right).
\end{aligned}
\end{equation*}
From this, we derive a bound:
\begin{equation*}
\begin{aligned}
\underset{a \sim \pi'}{\E} &\left[A^{\pi}(s,a) \right] \geq \\ 
& \underset{a \sim \pi'}{\E} \left[\hat{A}^{\pi} (s,a) \right] - \underset{a \sim \pi}{\E} \left[\hat{A}^{\pi} (s,a) \right]\\
& - 2 \max_a \left| A^{\pi} (s,a) - \hat{A}^{\pi} (s,a) \right| D_{TV} (\pi' || \pi) [s],
\end{aligned}
\end{equation*}
which gives us the following corollary to Theorem \ref{maintheorem}. 

\begin{corollary}[Policy Improvement Bound with Arbitrary Advantage Estimators] For any policies $\pi', \pi$, and any advantage estimator $\hat{A}^{\pi} : S \times A \to \Real{}$, with $c(s) \doteq \max_a | A^{\pi} (s,a) - \hat{A}^{\pi} (s,a) |$ and $\epsilon^{\pi'} \doteq \max_s | \E_{a \sim \pi'} [A^{\pi} (s,a) ] |$, the following bound holds:
\begin{equation}
\begin{aligned}
J(\pi') - J(\pi) &\\
\geq \frac{1}{1-\gamma} \underset{\begin{subarray}{c} s \sim d^{\pi} \\ a \sim \pi' \end{subarray}}{\E} &\Bigg[ \hat{A}^{\pi} (s,a) - \underset{\bar{a} \sim \pi}{\E}\left[ \hat{A}^{\pi} (s,\bar{a}) \right] \\
&  - 2 \left( c(s) + \frac{\gamma \epsilon^{\pi'}}{1-\gamma}\right)  D_{TV} (\pi'||\pi)[s] \Bigg]. \label{bound3} 
\end{aligned}
\end{equation}

Furthermore, the bound is tight (when $\pi' = \pi$, the LHS and RHS are identically zero). 
\label{arbitrary-advantage}
\end{corollary}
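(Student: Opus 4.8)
The plan is to start from Corollary~\ref{advantage-bound} (the bound~(\ref{bound2}) with $f = V^{\pi}$) and simply substitute in a lower bound for the integrand $\E_{a\sim\pi'}[A^{\pi}(s,a)]$ that is expressed in terms of the estimator $\hat A^{\pi}$ rather than the true advantage. All of the heavy lifting — the appearance of the $1/(1-\gamma)$ prefactor, the $\gamma\epsilon^{\pi'}/(1-\gamma)$ coefficient on $D_{TV}(\pi'||\pi)[s]$, and the outer expectation over $s\sim d^{\pi},\,a\sim\pi'$ — is already in place in (\ref{bound2}); the corollary is obtained by a pointwise (per-state) manipulation inside that expectation.

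First I would record the exact decomposition that is displayed just before the corollary statement: using $\E_{a\sim\pi}[A^{\pi}(s,a)] = 0$, for every state $s$,
\begin{equation*}
\begin{aligned}
\underset{a\sim\pi'}{\E}\left[A^{\pi}(s,a)\right]
= &\; \underset{a\sim\pi'}{\E}\left[\hat A^{\pi}(s,a)\right] - \underset{a\sim\pi}{\E}\left[\hat A^{\pi}(s,a)\right] \\
&+ \sum_a \left(\pi'(a|s) - \pi(a|s)\right)\left(A^{\pi}(s,a) - \hat A^{\pi}(s,a)\right).
\end{aligned}
\end{equation*}
Next I would lower-bound the last (``correction'') term by pulling absolute values inside and applying H\"older's inequality with $p=1$, $q=\infty$ over the action index: the sum is at least $-\big(\sum_a|\pi'(a|s)-\pi(a|s)|\big)\max_a|A^{\pi}(s,a)-\hat A^{\pi}(s,a)| = -2\,c(s)\,D_{TV}(\pi'||\pi)[s]$, where I use $\sum_a|\pi'(a|s)-\pi(a|s)| = 2D_{TV}(\pi'||\pi)[s]$ and the definition $c(s)\doteq\max_a|A^{\pi}(s,a)-\hat A^{\pi}(s,a)|$. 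This yields the per-state inequality
\begin{equation*}
\underset{a\sim\pi'}{\E}\left[A^{\pi}(s,a)\right] \geq \underset{a\sim\pi'}{\E}\left[\hat A^{\pi}(s,a)\right] - \underset{\bar a\sim\pi}{\E}\left[\hat A^{\pi}(s,\bar a)\right] - 2c(s)\,D_{TV}(\pi'||\pi)[s].
\end{equation*}

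Then I would substitute this into (\ref{bound2}), take the expectation over $s\sim d^{\pi}$ (which preserves the inequality since it is pointwise in $s$ and the bound on the correction term is valid for every $s$), and merge the two divergence terms: the $-2c(s)D_{TV}$ contribution combines with the $-\tfrac{2\gamma\epsilon^{\pi'}}{1-\gamma}D_{TV}$ term from (\ref{bound2}) into $-2\big(c(s)+\tfrac{\gamma\epsilon^{\pi'}}{1-\gamma}\big)D_{TV}(\pi'||\pi)[s]$, giving exactly (\ref{bound3}). For tightness, when $\pi'=\pi$ every $D_{TV}(\pi'||\pi)[s]$ vanishes and $\E_{a\sim\pi}\big[\hat A^{\pi}(s,a) - \E_{\bar a\sim\pi}[\hat A^{\pi}(s,\bar a)]\big] = 0$ identically, so the RHS is zero, matching $J(\pi')-J(\pi)=0$ on the LHS. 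I do not anticipate a real obstacle here; the only point requiring a little care is making sure the H\"older step uses the factor-of-two relation between $\|\cdot\|_1$ of the action-probability difference and $D_{TV}(\pi'||\pi)[s]$, and that the substitution into (\ref{bound2}) is legitimate state-by-state before taking $\E_{s\sim d^{\pi}}$.
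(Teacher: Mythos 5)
Your proposal is correct and matches the paper's derivation essentially step for step: the paper also establishes the per-state decomposition via $\E_{a\sim\pi}[A^{\pi}(s,a)]=0$, bounds the correction term by $-2c(s)D_{TV}(\pi'||\pi)[s]$, and plugs this into the bound of Corollary \ref{advantage-bound} to merge the divergence penalties. No gaps; the tightness check is also as in the paper.
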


Corollary \ref{arbitrary-advantage} tells us that we are theoretically justified in using \textit{any} advantage estimators as long as we increase the penalty on the policy divergence appropriately. Also, we can see that if the estimator is high-quality ($|A^{\pi} (s,a) - \hat{A}^{\pi} (s,a)|$ small) we can take larger policy improvement steps, which makes sense. If the advantage estimator is poor, then we probably will not make much progress; this is reflected in the bound. 

The base optimization problem for EMPI with an advantage estimator, which we obtain by using (\ref{bound3}), dropping constants, and removing $\pi'$ from the penalty coefficient by bounding, is 
\begin{equation*}
\begin{aligned}
\max_{\pi' \in \Pi_{\theta}} &&& \underset{\begin{subarray}{c} s \sim d^{\pi} \\ a \sim \pi' \end{subarray}}{\E} \left[ \hat{A}^{\pi} (s,a) - C(s) D_{TV} (\pi' || \pi)[s] \right], \\
&&& \text{where } C(s) = 2\left(c(s) + \frac{\gamma \epsilon}{1-\gamma}\right), \\
&&& c(s) = \max_a \left|A^{\pi} (s,a) - \hat{A}^{\pi} (s,a) \right|, \\
&&& \text{and } \epsilon = \max_{s,a} |A^{\pi} (s,a)|. 
\end{aligned}
\end{equation*}

Now, we will put the objective in a form which can be sampled on policy $\pi$. First, we use the importance sampling identity so that the objective becomes
\begin{equation*}
\underset{\begin{subarray}{c} s \sim d^{\pi} \\ a \sim \pi \end{subarray}}{\E} \left[ \frac{\pi'(a|s)}{\pi(a|s)}\hat{A}^{\pi} (s,a) - C(s) D_{TV} (\pi' || \pi)[s] \right].
\end{equation*}
Next we rewrite the expectation in terms of trajectories:
\begin{equation}
\underset{\tau \sim \pi}{\E} \left[ \sum_{t=0}^{\infty} \gamma^t \left(\frac{\pi'(a_t|s_t)}{\pi(a_t|s_t)}\hat{A}^{\pi} (s_t,a_t) - C(s_t) D_{TV} (\pi' || \pi)[s_t] \right)\right].
\label{objective}
\end{equation}

After running an agent on policy $\pi$ to generate a set of sample trajectories, we can estimate (\ref{objective}) by averaging over the sample trajectories. The sample estimate of (\ref{objective}) then serves as the objective for the optimization step in EMPI. 

We do not give experimental results here, but plan to include them in a future version of this work.

\section{Implications for Trust Region Policy Optimization}

Schulman et~al. proposed Trust Region Policy Optimization, where at every iteration the policy is updated from $\pi$ to $\pi'$ by solving the following optimization problem:
\begin{equation}
\begin{aligned}
\pi' = \arg & \max_{\pi' \in \Pi_{\theta}} \underset{\begin{subarray}{c} s \sim d^{\pi} \\ a \sim \pi'\end{subarray}}{\E} \left[A^{\pi} (s,a) \right] \\
& \text{subject to} \;\; \underset{s \sim d^{\pi}}{\E} \left[D_{KL} (\pi || \pi') [s] \right] \leq \delta,
\end{aligned}
\label{TRPO}
\end{equation}
where $D_{KL} (\pi || \pi')[s] = \E_{a \sim \pi}\left[ \log \frac{\pi(a|s)}{\pi'(a|s)}\right]$, and the policy divergence limit $\delta$ is a hyperparameter of the algorithm.

The KL-divergence and the TV-divergence of arbitrary distributions $p,q$ are related by Pinsker's inequality, $D_{TV} (p || q) \leq \sqrt{D_{KL} (p || q)/2}$; combining this with Jensen's inequality, we obtain the following bound:
%
\begin{equation}
\begin{aligned}
\underset{s \sim d^{\pi}}{\E} \left[ D_{TV} (\pi'||\pi)[s] \right] \; &\leq \underset{s \sim d^{\pi}}{\E} \left[ \sqrt{\frac{1}{2} D_{KL} (\pi||\pi')[s]} \right] \\
&\leq \sqrt{\frac{1}{2} \underset{s \sim d^{\pi}}{\E} \left[ D_{KL} (\pi||\pi')[s] \right]}.
\label{pinskers}
\end{aligned}
\end{equation}

By this bound and Corollary \ref{advantage-bound}, we have a result for the worst-case performance of TRPO.

\begin{corollary}[TRPO worst-case performance] \label{TRPOperf}
A lower bound on the policy performance difference between policies $\pi$ and $\pi'$, where $\pi'$ is given by (\ref{TRPO}) and $\pi \in \Pi_{\theta}$, is
\begin{equation}
J(\pi') - J(\pi) \geq \frac{-\sqrt{2\delta} \gamma \epsilon^{\pi'}}{(1-\gamma)^2},
\label{TRPOdiff}
\end{equation}
where $\epsilon^{\pi'} = \max_s | \E_{a \sim \pi'} [A^{\pi} (s,a) ] |$.
\end{corollary}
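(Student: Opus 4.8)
The plan is to obtain (\ref{TRPOdiff}) by specializing Corollary~\ref{advantage-bound} to the TRPO iterate and then exploiting the structure of the TRPO program (\ref{TRPO}) together with the Pinsker/Jensen chain (\ref{pinskers}). First I would apply Corollary~\ref{advantage-bound} with $\pi'$ the policy returned by (\ref{TRPO}), which gives
\[
J(\pi') - J(\pi) \geq \frac{1}{1-\gamma}\underset{\begin{subarray}{c} s\sim d^{\pi} \\ a\sim\pi'\end{subarray}}{\E}\left[A^{\pi}(s,a) - \frac{2\gamma\epsilon^{\pi'}}{1-\gamma}D_{TV}(\pi'||\pi)[s]\right].
\]

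The key step is to discard the advantage term. Since $\pi$ is itself feasible for (\ref{TRPO}) --- its average KL-divergence with itself is $0 \leq \delta$ --- and attains objective value $\E_{s\sim d^{\pi}, a\sim\pi}[A^{\pi}(s,a)] = 0$, the optimum of (\ref{TRPO}), achieved by $\pi'$, must satisfy $\E_{s\sim d^{\pi}, a\sim\pi'}[A^{\pi}(s,a)] \geq 0$. Dropping this nonnegative quantity only loosens the bound, leaving
\[
J(\pi') - J(\pi) \geq -\frac{2\gamma\epsilon^{\pi'}}{(1-\gamma)^2}\underset{s\sim d^{\pi}}{\E}\left[D_{TV}(\pi'||\pi)[s]\right].
\]
Then I would bound the remaining divergence term using (\ref{pinskers}) together with the TRPO constraint $\E_{s\sim d^{\pi}}[D_{KL}(\pi||\pi')[s]] \leq \delta$, which yields $\E_{s\sim d^{\pi}}[D_{TV}(\pi'||\pi)[s]] \leq \sqrt{\delta/2}$. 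Substituting and using $2\sqrt{\delta/2} = \sqrt{2\delta}$ gives exactly (\ref{TRPOdiff}).

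There is no genuinely hard computation here --- the argument is a short chain of previously-established inequalities. The only two points requiring care are: (i) recognizing that the optimal value of the TRPO objective is lower-bounded by $0$ because $\pi$ is a feasible competitor, which is what lets us delete the $A^{\pi}$ term; and (ii) matching the direction of the KL-divergence in Pinsker's inequality to the constraint in (\ref{TRPO}), which is legitimate because $D_{TV}$ is symmetric, so $D_{TV}(\pi'||\pi)[s] = D_{TV}(\pi||\pi')[s] \leq \sqrt{\frac{1}{2} D_{KL}(\pi||\pi')[s]}$. Everything else is bookkeeping of the constant factors.
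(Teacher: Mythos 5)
Your proposal is correct and follows essentially the same route as the paper's proof: specialize Corollary \ref{advantage-bound} to the TRPO iterate, use feasibility of $\pi$ in (\ref{TRPO}) to conclude $\E_{s \sim d^{\pi}, a\sim \pi'}[A^{\pi}(s,a)] \geq 0$ and drop that term, then apply (\ref{pinskers}) with the constraint $\E_{s\sim d^{\pi}}[D_{KL}(\pi||\pi')[s]] \leq \delta$. Your write-up simply makes explicit the bookkeeping (and the TV-symmetry remark) that the paper's terse proof leaves implicit.
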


\begin{proof} $\pi$ is a feasible point of the optimization problem with objective value 0, so $\E_{s \sim d^{\pi}, a\sim \pi'} [A^{\pi} (s,a)] \geq 0$. The rest follows by  Corollary \ref{advantage-bound} and (\ref{pinskers}), noting that TRPO bounds the average KL-divergence by $\delta$.
\end{proof}

By (\ref{TRPOdiff}), we can see that TRPO is theoretically justified, in the sense that an appropriate selection of the hyper-parameter $\delta$ can guarantee a worst-case loss. 


\section{Discussion}

In this note, we derived a new policy improvement bound in which an average, rather than a maximum, policy divergence is penalized. We proposed Easy Monotonic Policy Iteration, an algorithm that exploits the bound to generate sequences of policies with guaranteed non-decreasing returns and which is easy to implement in a sample-based framework. We showed how to implement EMPI and theoretically justified the use of advantage estimators in the optimization in the inner loop. Lastly, we showed that our policy improvement bound gives a new theoretical foundation to Trust Region Policy Optimization, an algorithm for approximate monotonic policy improvements proposed by Schulman et~al. \yrcite{Schulman2006} which was shown to perform well empirically on a wide variety of tasks; here, we were able to bound the worst-case performance at each iteration of TRPO.

In a future version of this work, we will give experimental results from implementing EMPI on a range of reinforcement learning benchmarks, including high-dimensional domains like Atari.

\bibliography{bibliography}
\bibliographystyle{icml2016}

\end{document}